%% file: main.tex
\documentclass{article} 
\usepackage{iclr2027_conference,times}

\input{math_commands.tex}

\usepackage{amsthm}

\newtheorem{proposition}{Proposition}
\newtheorem{theorem}{Theorem}
\theoremstyle{definition}
\newtheorem{definition}{Definition}

\usepackage{hyperref}
\usepackage{url}
\usepackage{booktabs}
\usepackage{multirow}
\usepackage{colortbl}
\usepackage{arydshln}
\usepackage{graphicx}
\usepackage{wrapfig}
\usepackage{enumitem}

\title{Where to Adapt Matters: Layer-Selective Fine-Tuning for Capability Retention}

\author{Zhiqiang Pang\thanks{Equal contribution.}, \;
Zihong Sun\footnotemark[1], \;
Qi Xie, \;
Jun Shu, \;
Deyu Meng, \;
Zongben Xu \\
School of Mathematics and Statistics, Xi'an Jiaotong University, Xi'an, Shaanxi, China \\
\texttt{xjtupzq@gmail.com, szhc0gk@stu.xjtu.edu.cn} \\
\texttt{\{xie.qi, junshu, dymeng, zbxu\}@mail.xjtu.edu.cn}
}

\iclrfinalcopy 
\begin{document}

\maketitle
\lhead{Under review as a conference paper at ICLR 2027}

\begin{abstract}
Parameter-efficient fine-tuning (PEFT) enables large language models (LLMs) 
to adapt to specialized tasks, but often at the cost of degrading general capabilities 
acquired during pretraining. Existing approaches primarily mitigate this trade-off 
through data replay or regularization, relying on additional data or explicit optimization constraints. 
We instead focus on a different question: where should adaptation be applied?
We find that fine-tuning different Transformer layers produces different target-task gains 
and degrees of capability degradation, suggesting that not all layers are equally suitable for adaptation. 
To characterize this difference, we use layer-wise empirical Fisher information to measure target-task sensitivity. 
However, computing Fisher scores requires backward computation and becomes increasingly expensive for large models. 
We therefore introduce input–output cosine similarity as a lightweight, forward-only proxy for ranking layer sensitivity. 
Across models and tasks, layers with lower input–output similarity consistently exhibit higher empirical Fisher scores. 
Building on this observation, we propose Layer-Selective LoRA (LS-LoRA), which places trainable LoRA adapters only 
in layers with low input–output similarity. Experiments on mathematical reasoning and code generation 
show that LS-LoRA improves average target-task performance while retaining substantially more commonsense reasoning capability 
than standard all-layer LoRA, demonstrating that carefully choosing where to adapt 
can provide a simple and effective way to balance target-task adaptation and general capability retention.
\end{abstract}

\section{Introduction}

Large language models (LLMs) acquire strong general capabilities through
large-scale pretraining, enabling them to perform a wide range of downstream
tasks~\citep{brown2020language,bommasani2021opportunities}.
Despite the general capabilities, specialized applications such as mathematical
reasoning~\citep{xin2024deepseekprover} and code
generation~\citep{luo2024wizardcoder} often require fine-tuning on domain-specific
data to meet the demands of target tasks.
However, as LLMs continue to scale, full fine-tuning, which updates all model
parameters, incurs increasingly high computational and memory
costs~\citep{hu2022lora}.

Parameter-efficient fine-tuning (PEFT), particularly low-rank adaptation
(LoRA), reduces these costs by optimizing only a small number of parameters,
which is widely adopted in LLM
fine-tuning~\citep{houlsby2019parameter,li-liang-2021-prefix,lester2021power,hu2022lora}.
However, improvements in target-task performance through PEFT 
may come at the expense of general capabilities acquired during pretraining, 
a phenomenon known as forgetting~\citep{biderman2024lora}.
Therefore, effective adaptation requires improving target-task performance
while retaining these general capabilities~\citep{huang2026peftarena}.

Existing approaches mitigate capability degradation primarily through data
replay and regularization. Data replay mixes examples from
general domains into downstream training to retain general capabilities
~\citep{scialom-etal-2022-fine,bethune2025scaling}.
However, replay methods that rely on stored examples may be limited by data
availability, particularly when the original pretraining corpus is
inaccessible~\citep{marek2026forgetting}.
Regularization methods preserve prior knowledge by constraining parameter
changes~\citep{kirkpatrick2017overcoming,li2018explicit}.
However, these constraints may limit improvements in target-task performance~\citep{zenke2017continual}.
These methods mainly rely on the data or constraints used during fine-tuning,
leaving a different question underexplored: \textit{where should adaptation be applied?}

\begin{wrapfigure}{r}{0.4\textwidth}
    \vspace{-2mm}
    \centering
    \includegraphics[width=\linewidth]{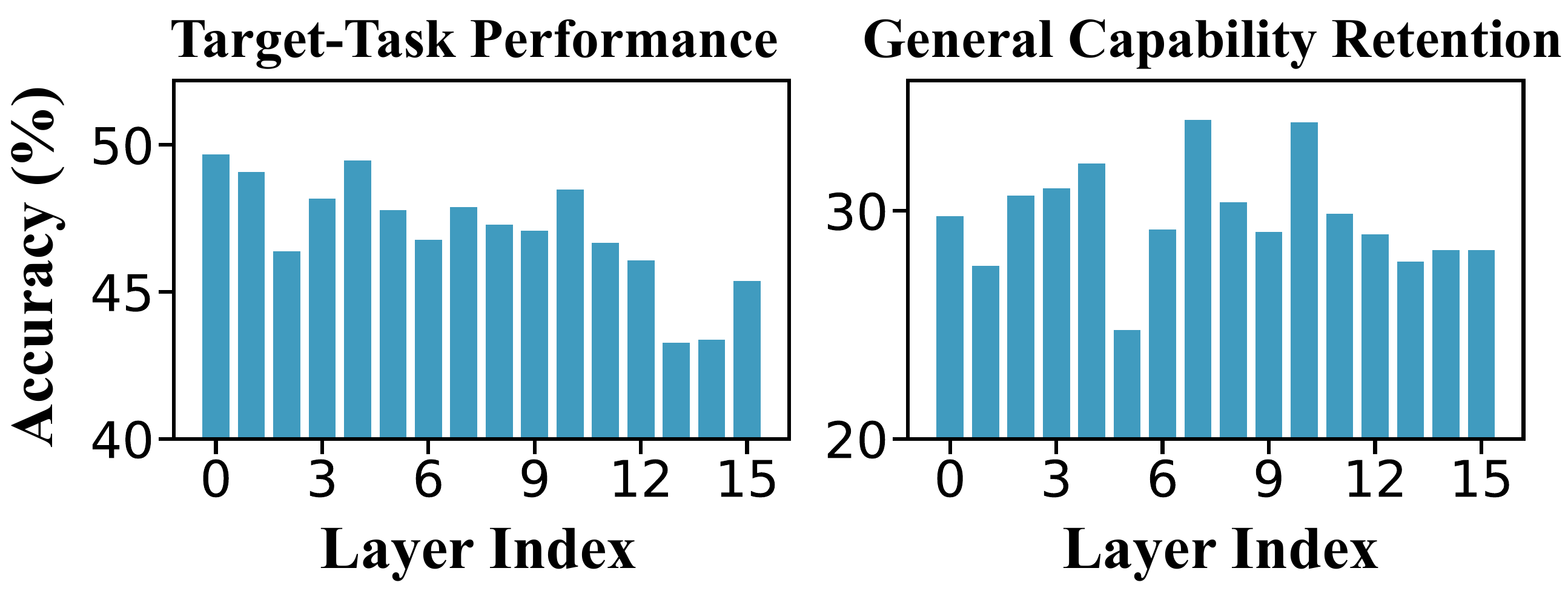}
    \caption{Effects of independently fine-tuning each layer of Llama-3.2-1B on Math-10K. 
    Each bar corresponds to fine-tuning a single layer. 
    Left: target-task performance. 
    Right: general capability retention after fine-tuning.}
    \vspace{-2mm}
    \label{Figs:layerwise}
\end{wrapfigure}


We find that the choice of layers to adapt matters for both target-task performance and general capability retention.
Standard LoRA typically uses the same adaptation configuration across Transformer layers, 
yet fine-tuning different layers can lead to different effects on adaptation and retention, as shown in Fig.~\ref{Figs:layerwise}. 
Adapting some layers contribute more to target-task performance, while adapting others lead to greater degradation of general capabilities. 
These observations motivate selecting which layers to adapt.

The next question is how to identify which layers should be adapted for a given target task. 
We use layer-wise empirical Fisher scores to characterize target-task sensitivity, 
and introduce input–output cosine similarity as an efficient forward-only proxy for this sensitivity. 
Based on this proxy, we propose Layer-Selective LoRA (LS-LoRA), which trains LoRA adapters only
in layers with low input–output cosine similarity.

Our main contributions are as follows:
\begin{itemize}
    \item We use layer-wise empirical Fisher scores to characterize 
    differences in target-task sensitivity across LLM layers, 
    motivating a layer-selective adaptation strategy.
    \item We propose input–output cosine similarity as an efficient proxy
    for layer-wise sensitivity. It requires only a single forward pass 
    over target-task calibration inputs, without response labels or backpropagation.
    Our analysis shows consistent negative rank correlations between this
    proxy and empirical Fisher scores across models and tasks.
    \item We introduce LS-LoRA, which trains 
    LoRA adapters only in layers with low input–output cosine
    similarity. 
    Experiments on mathematical reasoning and code generation 
    show that LS-LoRA improves average target-task performance 
    while reducing degradation of the evaluated general capabilities.
\end{itemize}

\section{Related Work}

\subsection{Parameter-Efficient Fine-Tuning}

PEFT adapts pretrained models through lightweight modules or restricted updates,
including adapters~\citep{houlsby2019parameter}, prefix and prompt
tuning~\citep{li-liang-2021-prefix,lester2021power}, and low-rank updates in
LoRA~\citep{hu2022lora}. AdaLoRA further allocates rank budgets according to
importance scores estimated during training~\citep{zhang2023adaptive}.
Our work complements these approaches by selecting which Transformer layers
receive LoRA adapters before fine-tuning.

Parameter efficiency alone does not prevent capability degradation
\citep{biderman2024lora}. Replay methods retain knowledge using additional
training examples~\citep{scialom-etal-2022-fine}, while regularization methods
constrain parameter updates~\citep{kirkpatrick2017overcoming,li2018explicit}.
In particular, EWC uses Fisher information to protect parameters important
for previous tasks~\citep{kirkpatrick2017overcoming}. We instead use empirical
Fisher scores to analyze target-task layer sensitivity and investigate layer
selection as a way to balance adaptation and capability retention.

\subsection{Layer-wise Analysis and Selective Adaptation}

Transformer layers differ in their representations and functional
importance~\citep{voita2019bottomup,zhang2024investigating}.
ShortGPT uses input--output similarity to identify redundant layers for
pruning~\citep{men2025shortgpt}. For adaptation, layer-freezing studies examine
predefined subsets~\citep{hwang2025exploring}, while LISA stochastically activates
subsets during training~\citep{pan2024lisa}.
Representation-based approaches also guide layer selection.
\citet{liu-etal-2025-spectral} use Centered Kernel Alignment to identify
intrinsic critical layers, and \citet{ogawa2026layerwise} study
similarity-based layer-wise LoRA fine-tuning.
Layer Card uses task-dependent residual, optimization, and cost diagnostics
to guide PEFT module placement~\citep{xu2026understanding}.
Our contribution is to connect a forward-only layer-selection criterion
to target-task empirical Fisher sensitivity and evaluate its consequences
for both adaptation and capability retention.
We use input--output cosine similarity to select a fixed set of layers
before fine-tuning, without response labels or backpropagation during
selection.

\section{Method}
\label{sec:method}

We first introduce Fisher information as a measure of parameter sensitivity,
then use it to examine differences across LLM layers. These differences motivate
our layer-selection method, which uses input–output cosine similarity to choose
layers for fine-tuning with only forward computation.

\subsection{Preliminaries}
\label{sec:Preliminaries}

\begin{definition}[Fisher information~\citep{fisher1922foundations}]
\label{def:fisher_information}
Let $Z\sim p_\theta$ have a probability density or mass function,
with $\theta$ in an open set $\Theta\subseteq\mathbb{R}^d$.
Assume that $\log p_\theta(Z)$ is differentiable in $\theta$
almost surely and that its gradient has a finite second moment.
The Fisher information, which quantifies the information
that $Z$ carries about $\theta$, is defined as
\begin{equation}
    F(\theta)
    =\mathbb{E}_{Z\sim p_\theta}
    \left[
    \nabla_\theta\log p_\theta(Z)
    \left(\nabla_\theta\log p_\theta(Z)\right)^\top
    \right].
    \label{eq:general_fisher}
\end{equation}
\end{definition}

The following proposition relates Fisher information to the local sensitivity
of the model distribution to parameter changes. 
The proof is provided in Appendix~\ref{app:proof_fisher_kl}.

\begin{proposition}[Fisher information and local distributional sensitivity]
\label{pro:fisher_kl}
Let $F(\theta)$ be the Fisher information in
Definition~\ref{def:fisher_information}. For fixed $\theta$, assume that,
for $\theta+\delta$ in a neighborhood of $\theta$, the distributions $p_{\theta+\delta}$
have a common support independent of $\delta$,
that $p_{\theta+\delta}(z)$ is twice continuously differentiable in $\delta$,
and that $D_{\mathrm{KL}}(p_\theta\Vert p_{\theta+\delta})$ is finite and twice
continuously differentiable in $\delta$. Assume also that derivatives with
respect to $\delta$ up to second order may be passed under the integrals
(or sums) defining normalization
and this KL divergence. Then, as $\delta\to0$ with
$\theta+\delta\in\Theta$,
\begin{equation}
    D_{\mathrm{KL}}\!\left(p_\theta\,\Vert\,p_{\theta+\delta}\right)
    =\frac12\delta^\top F(\theta)\delta+o(\|\delta\|_2^2).
    \label{eq:fisher_kl}
\end{equation}
\end{proposition}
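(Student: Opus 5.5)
The plan is to define $g(\delta)=D_{\mathrm{KL}}(p_\theta\Vert p_{\theta+\delta})$ and Taylor-expand it to second order at $\delta=0$. Since $g$ is assumed twice continuously differentiable near $0$, Taylor's theorem with Peano remainder gives
\begin{equation*}
g(\delta)=g(0)+\nabla g(0)^\top\delta+\tfrac12\delta^\top\nabla^2 g(0)\,\delta+o(\|\delta\|_2^2).
\end{equation*}
It then suffices to show three things: $g(0)=0$, $\nabla g(0)=0$, and $\nabla^2 g(0)=F(\theta)$. The first is immediate because the KL divergence of a distribution from itself vanishes.

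For the derivatives, write $g(\delta)=\mathbb{E}_{Z\sim p_\theta}[\log p_\theta(Z)]-\mathbb{E}_{Z\sim p_\theta}[\log p_{\theta+\delta}(Z)]$. The common-support assumption makes the logarithms well defined on a fixed set. The first term does not depend on $\delta$. Passing derivatives under the integral, as the hypotheses allow, gives
\begin{equation*}
\nabla g(0)=-\mathbb{E}_{p_\theta}\bigl[\nabla_\theta\log p_\theta(Z)\bigr],\qquad
\nabla^2 g(0)=-\mathbb{E}_{p_\theta}\bigl[\nabla_\theta^2\log p_\theta(Z)\bigr].
\end{equation*}

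The key step is the pair of score identities. Differentiate the normalization $\int p_{\theta+\delta}(z)\,dz=1$ once and twice in $\delta$ at $0$, passing derivatives inside as permitted. This gives $\int\nabla p_\theta=0$ and $\int\nabla^2 p_\theta=0$. Now use $\nabla\log p=\nabla p/p$ and
\begin{equation*}
\nabla^2\log p=\frac{\nabla^2 p}{p}-\nabla\log p\,(\nabla\log p)^\top.
\end{equation*}
The first identity yields $\mathbb{E}_{p_\theta}[\nabla\log p_\theta]=0$, so $\nabla g(0)=0$. The second yields $-\mathbb{E}_{p_\theta}[\nabla^2\log p_\theta]=F(\theta)$, the information-matrix equality, so $\nabla^2 g(0)=F(\theta)$.

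I expect the only real obstacle to be justifying these interchanges of differentiation and integration. I plan to invoke the stated hypotheses directly rather than derive dominated-convergence bounds. One detail needs care: differentiating under the KL integral gives the derivatives of $g$ at $0$ by continuity. The identities are clean when the derivatives are evaluated at $\delta=0$, which is exactly where they are needed. The finite-second-moment assumption in Definition~\ref{def:fisher_information} guarantees that $F(\theta)$ is finite, so the final quadratic term is well defined. Substituting the three facts into the Taylor expansion gives \eqref{eq:fisher_kl}.
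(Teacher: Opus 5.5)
Your proposal is correct and follows the paper's proof essentially verbatim: you Taylor-expand $g(\delta)=D_{\mathrm{KL}}(p_\theta\Vert p_{\theta+\delta})$ at $0$, note $g(0)=0$, and use the once- and twice-differentiated normalization identity to get $\nabla g(0)=0$ and $\nabla^2 g(0)=F(\theta)$. One minor point: splitting $g$ into two separate expectations assumes $\mathbb{E}_{p_\theta}[\log p_\theta(Z)]$ is finite, which the hypotheses do not guarantee, so it is cleaner to differentiate the integrand $p_\theta\log(p_\theta/p_{\theta+\delta})$ directly, whose $\delta$-gradient is $-p_\theta\nabla_\delta\log p_{\theta+\delta}$.
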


Proposition~\ref{pro:fisher_kl} shows that, for small parameter
perturbations $\delta$ of equal magnitude, higher Fisher information
$F(\theta)$ along the perturbation direction induces a larger shift
in the model distribution, as measured by KL divergence.

For large neural networks, exact computation of the Fisher information
matrix is typically impractical. Evaluating the expectation over the model
distribution can be expensive, and explicitly storing the full matrix
requires memory quadratic in the number of parameters~\citep{martens2020new}.
For our layer-wise analysis, we compute only the diagonal entries of the
empirical Fisher from observed samples $\{z_i\}_{i=1}^N$:
\begin{equation}
    \widehat F_{jj}(\theta)
    =\frac1N\sum_{i=1}^{N}
    \left(\frac{\partial\log p_\theta(z_i)}{\partial\theta_j}\right)^2,
    \qquad j=1,\ldots,d.
    \label{eq:empirical_fisher}
\end{equation}
These entries can be accumulated from squared gradients without forming
the full matrix. They provide a practical measure of the sensitivity of
the observed-data log-likelihood to individual parameters.

\subsection{Fisher-based Layer Sensitivity in LLM Fine-tuning}
\label{sec:fisher_analysis}

As shown in Proposition~\ref{pro:fisher_kl}, Fisher information quantifies how strongly
small parameter perturbations affect the model distribution.
Motivated by this sensitivity interpretation,
we use the empirical Fisher to compare how sensitive the target-task
log-likelihood is to parameter changes in different LLM layers.
We define a layer as one complete Transformer block, including its attention,
feed-forward, normalization, and residual operations. 
Consider an LLM with $L$ layers. Each layer $l$ maps
its input representation $H^{(l-1)}$ to an output representation
$H^{(l)}$:
\begin{equation}
    H^{(l)}=f_l\left(H^{(l-1)};\theta^{(l)}\right),
    \qquad l=1,\ldots,L,
    \label{eq:transformer_block}
\end{equation}
where $f_l$ denotes the transformation performed by layer $l$,
parameterized by $\theta^{(l)}$. 
Here, $H^{(l)}\in\mathbb{R}^{T\times d_h}$ contains the hidden
representations of all tokens after layer $l$, where $T$ is the
sequence length and $d_h$ is the hidden dimension.

To measure layer sensitivity on a target task $\tau$, we use a dataset
$\mathcal{S}_\tau=\{(x_i,y_i)\}_{i=1}^N$, where $x_i$ is an input
and $y_i$ is its target response. The conditional log-likelihood
$\log p_\theta(y_i\mid x_i)$ sums the token log-probabilities in
the response. 
We compute $\widehat F_{jj}(\theta)$ using Eq.~(\ref{eq:empirical_fisher})
with the conditional log-likelihood $\log p_\theta(y_i\mid x_i)$ on
$\mathcal{S}_\tau$. Averaging these entries over the parameters in layer $l$
gives the scalar sensitivity score:
\begin{equation}
    I_l
    =\frac{1}{d_l}\sum_{j\in\mathcal{J}_l}\widehat F_{jj}(\theta)
    =\frac{1}{N d_l}\sum_{i=1}^{N}
    \sum_{j\in\mathcal{J}_l}
    \left(\frac{\partial\log p_\theta(y_i\mid x_i)}
    {\partial\theta_j}\right)^2,
    \label{eq:block_fisher_score}
\end{equation}
where $\mathcal{J}_l$ is the set of parameter indices belonging to layer $l$
and $d_l=|\mathcal{J}_l|$ is its number of parameters.
A larger $I_l$ indicates greater sensitivity of
the target-response log-likelihood to changes in that layer's
parameters. The score can be computed directly by averaging squared
partial derivatives without explicitly forming the matrix.

\begin{figure}[t]
    \vspace{-0mm}
    \hspace{-0mm}\includegraphics[width=0.95\linewidth]{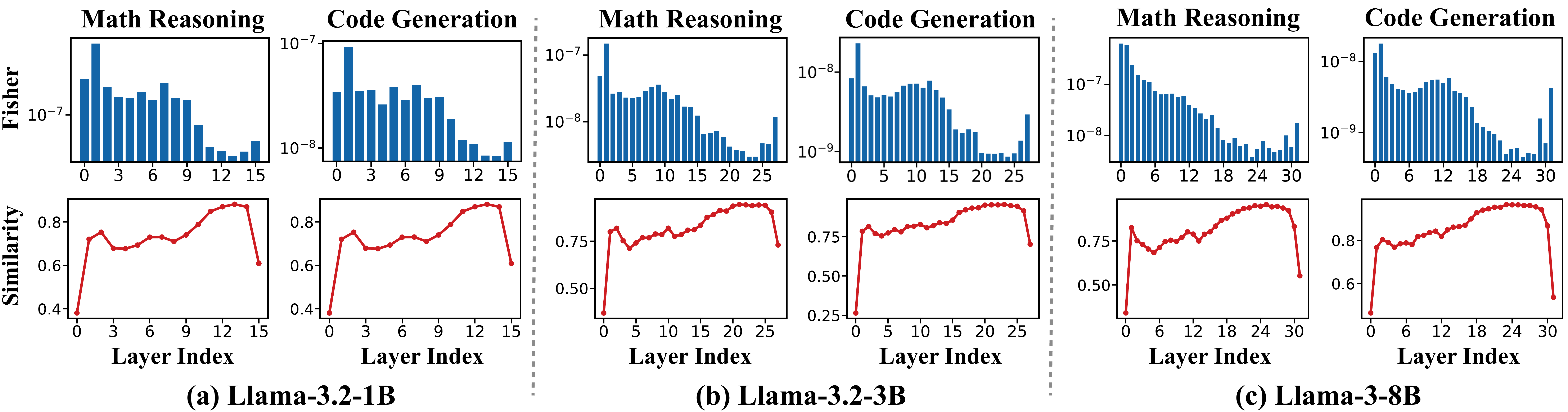}\centering
    \vspace{-2mm}
    \caption{Layer-wise empirical Fisher scores (top)
    and cosine similarities between layer input and output representations (bottom) for (a) Llama-3.2-1B, 
    (b) Llama-3.2-3B, and (c) Llama-3-8B on mathematical reasoning and code generation.}
    \label{Figs:fisher_cosine}
    \vspace{-0mm}
\end{figure}

As shown in Fig.~\ref{Figs:fisher_cosine}, empirical Fisher scores
vary substantially across layers for both Llama-3.2-1B and
Llama-3.2-3B on mathematical reasoning and code generation.
Scores are generally higher in shallow layers, decrease toward
deeper layers, and rise again in the final layers. 
These observations motivate considering
target-task layer sensitivity when selecting layers for adaptation.

\subsection{Layer Selection via input–output Similarity}
\label{sec:layer_selection}

\begin{wrapfigure}{r}{0.25\textwidth}
    \vspace{-2mm}
    \centering
    \includegraphics[width=\linewidth]{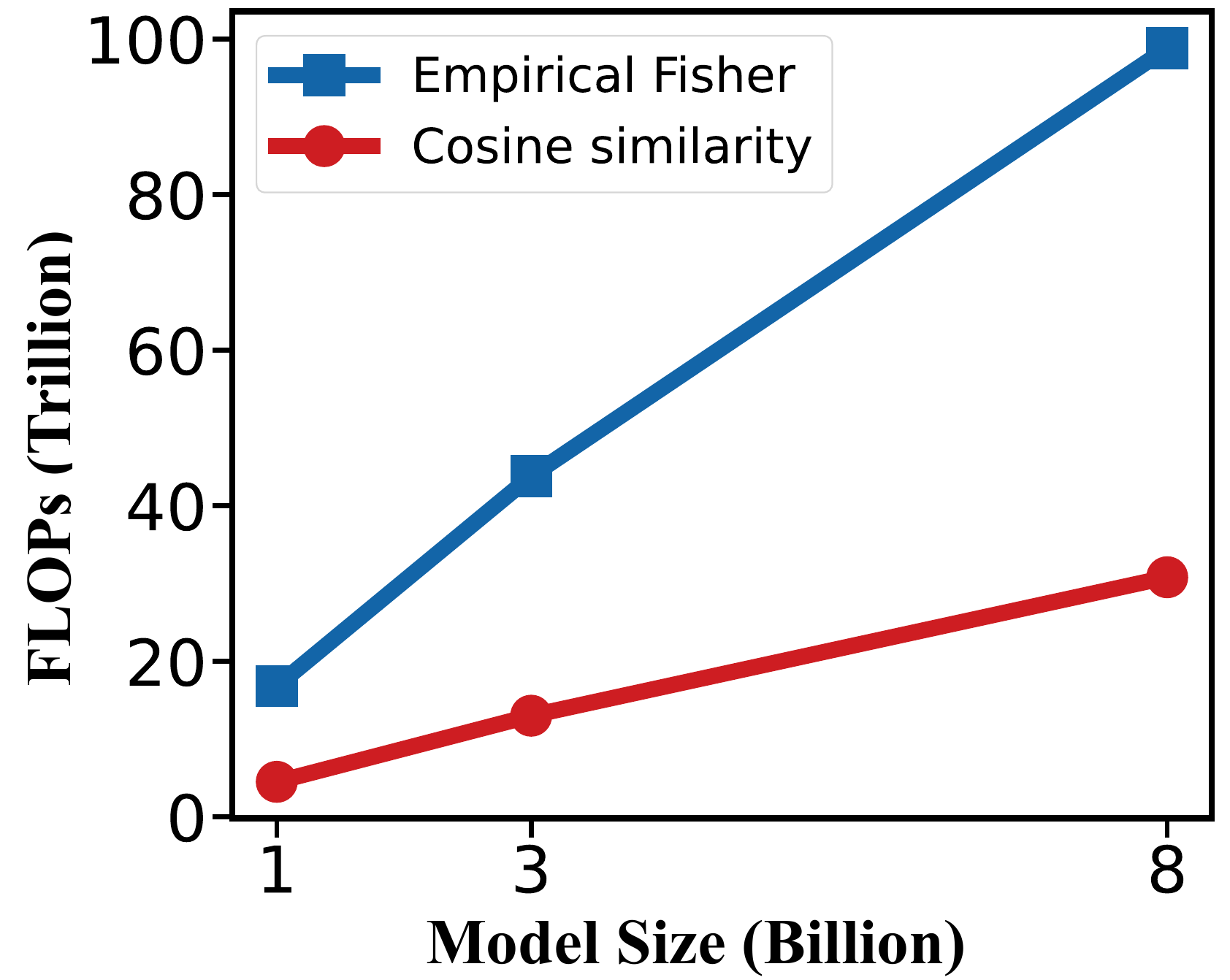}
    \caption{FLOPs for computing empirical Fisher and cosine similarity
    by model size.}
    \vspace{-2mm}
    \label{Figs:flops}
\end{wrapfigure}

Although empirical Fisher scores characterize layer-wise sensitivity,
computing them requires backpropagation, which incurs substantial memory
and computation costs for LLMs. As shown in
Fig.~\ref{Figs:flops}, the FLOPs required to compute Fisher scores
increase rapidly with model size.

Intuitively, a higher empirical Fisher score indicates that a layer
is more sensitive to parameter changes on the target task.
This sensitivity may be reflected in how the layer transforms
its input representations. Layers with lower input--output cosine
similarity change representation directions more substantially,
suggesting a more active role in processing task-relevant information.
Such layers may therefore exhibit greater task sensitivity and
higher empirical Fisher scores.
Motivated by this intuition, we use
input–output cosine similarity as a proxy for layer-wise sensitivity
in fine-tuning. Computing this similarity requires only a single forward
pass, enabling efficient comparison across layers without backpropagation, as shown in
Fig.~\ref{Figs:flops}.

Given a calibration set of target-task inputs
$\mathcal{U}_\tau=\{x_i\}_{i=1}^{N_c}$, we run the pretrained model
to obtain each layer's input and output representations.
For input $x_i$, let $H_i^{(l)}$ denote the representation matrix
defined in Eq.~\ref{eq:transformer_block}, and let
$h_{i,t}^{(l)}\in\mathbb{R}^{d_h}$
denote the representations of the $t$-th token in $H_i^{(l)}$. 
We compute the similarity score of layer $l$ as
\begin{equation}
    s_l
    =\frac{1}{|\mathcal{V}|}
    \sum_{(i,t)\in\mathcal{V}}
    \frac{
        \left\langle h_{i,t}^{(l-1)},h_{i,t}^{(l)}\right\rangle
    }{
        \left\|h_{i,t}^{(l-1)}\right\|_2
        \left\|h_{i,t}^{(l)}\right\|_2+\epsilon
    },
    \label{eq:layer_cosine_similarity}
\end{equation}
where $\mathcal{V}$ specifies the token positions used for scoring,
and $\epsilon>0$ ensures numerical stability.

By default, we use only the last non-padding token of each input,
so that $\mathcal{V}=\{(i,T_i)\}_{i=1}^{N_c}$, where $T_i$ denotes
the final valid token position in $x_i$. Thus, $s_l$ averages
the last-token input–output similarity across calibration inputs.
This gives one score per layer without response labels or gradients.

The following theorem gives a theoretical motivation for this proxy
under local parameter and average task-sensitivity conditions.

\begin{theorem}[Cosine similarity and an empirical Fisher lower bound]
\label{thm:cosine_fisher_bound}
Consider a differentiable residual layer with nonzero parameters
$\theta^{(l)}$ and nonzero selected input and output representations.
Assume that the task log-likelihood is sensitive to the full residual
directions on average, and that a bounded parameter direction
approximately reproduces their effect on the task to first order.
Then the empirical Fisher score $I_l$ and input--output similarity $s_l$,
computed on the same examples with $\epsilon=0$, satisfy
\begin{equation}
    I_l\geq A_l(1-s_l),
    \qquad
    A_l=\frac{2(1-\eta_l)^2\kappa_l^2m_l}
    {d_l C_l^2\|\theta^{(l)}\|_2^2}>0.
    \label{eq:cosine_fisher_bound}
\end{equation}
Here $d_l$ is the number of layer parameters, $m_l>0$ is the minimum
product of the selected input and output norms, and $\kappa_l>0$
lower-bounds the average directional task sensitivity.
The parameter direction $v_l$ satisfies
$\|v_l\|_2\leq C_l\|\theta^{(l)}\|_2$ with $C_l>0$, while
$\eta_l\in[0,1)$ bounds its relative approximation error measured through
the task gradients. Full definitions and the proof are in
Appendix~\ref{app:proof_cosine_fisher}.
\end{theorem}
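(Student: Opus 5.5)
The plan is to chain three elementary inequalities: (i) a lower bound on the residual norm $\|r\|$ in terms of $1-s_l$, (ii) a lower bound on the task gradient along $r$ in terms of $\kappa_l\|r\|$, and (iii) a transfer of that directional gradient onto the parameter gradient via $v_l$ and Cauchy--Schwarz. First I fix the hidden assumptions precisely. For each selected pair $(i,t)\in\mathcal V$, write the residual as $r_{i,t}=h^{(l)}_{i,t}-h^{(l-1)}_{i,t}$. Let $g_{i,t}=\nabla_{h^{(l)}_{i,t}}\log p_\theta(y_i\mid x_i)$ be the task gradient at the output representation.

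The \emph{average task-sensitivity} assumption will be taken as
\[
\frac{1}{|\mathcal V|}\sum_{(i,t)}\langle g_{i,t},\hat r_{i,t}\rangle^2\geq\kappa_l^2,
\qquad \hat r_{i,t}=r_{i,t}/\|r_{i,t}\|_2 .
\]
Residuals with $r_{i,t}=0$ have $s=1$ and contribute nothing to $1-s_l$, so they are dropped. Let $J_{i,t}=\partial h^{(l)}_{i,t}/\partial\theta^{(l)}$. The \emph{approximate reproduction} assumption will be taken as
\[
|\langle g_{i,t},J_{i,t}v_l\rangle-\langle g_{i,t},r_{i,t}\rangle|\leq\eta_l|\langle g_{i,t},r_{i,t}\rangle| .
\]
Consequently $|\langle g_{i,t},J_{i,t}v_l\rangle|\geq(1-\eta_l)|\langle g_{i,t},r_{i,t}\rangle|$. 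Because only the last token is used, $\langle g_{i,t},J_{i,t}v_l\rangle$ is exactly the directional derivative $\langle\nabla_{\theta^{(l)}}\log p_\theta(y_i\mid x_i),v_l\rangle$. Making this identification is the step I must get right in the appendix definitions.

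Step (i) is the geometric core. By the law of cosines,
\[
\|r\|^2=\|h^{(l-1)}\|^2+\|h^{(l)}\|^2-2\|h^{(l-1)}\|\|h^{(l)}\|\cos\geq 2\|h^{(l-1)}\|\|h^{(l)}\|(1-\cos).
\]
This uses $a^2+b^2\geq2ab$. Hence $\|r_{i,t}\|^2\geq 2m_l(1-s_{i,t})$, which is the source of the factor $2m_l$.

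Steps (ii)--(iii) combine the assumptions. For each example, Cauchy--Schwarz gives
\[
\|\nabla_{\theta^{(l)}}\log p\|_2^2\geq\frac{\langle\nabla_{\theta^{(l)}}\log p,v_l\rangle^2}{\|v_l\|^2}\geq\frac{(1-\eta_l)^2\langle g,r\rangle^2}{C_l^2\|\theta^{(l)}\|^2}.
\]
Then $\langle g,r\rangle^2=\|r\|^2\langle g,\hat r\rangle^2$.

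Averaging is the main obstacle. I need $\frac1{|\mathcal V|}\sum\|r\|^2\langle g,\hat r\rangle^2\geq\kappa_l^2\cdot2m_l\cdot\frac1{|\mathcal V|}\sum(1-s_{i,t})$. An average of products is not in general bounded by a product of averages. I would first try to define the sensitivity assumption per example, $\langle g_{i,t},\hat r_{i,t}\rangle^2\geq\kappa_l^2$, or equivalently as a weighted average with weights $\|r\|^2$. This makes the inequality immediate: $\ge\kappa_l^2\|r\|^2\ge 2\kappa_l^2m_l(1-s_{i,t})$ termwise, and averaging yields $2\kappa_l^2m_l(1-s_l)$ with $s_l$ as in Eq.~(\ref{eq:layer_cosine_similarity}) at $\epsilon=0$. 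If the paper insists on a plain average, I would instead define $\kappa_l^2$ as the $(1-s_{i,t})$-weighted average, which again closes the argument.

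Finally, $I_l=\frac1{Nd_l}\sum_i\|\nabla_{\theta^{(l)}}\log p(y_i\mid x_i)\|^2$ by Eq.~(\ref{eq:block_fisher_score}). With one selected token per example, $|\mathcal V|=N$. Dividing by $d_l$ gives $I_l\geq A_l(1-s_l)$. Positivity of $A_l$ follows from $\eta_l<1$, $\kappa_l,m_l>0$, and $\theta^{(l)}\neq0$.
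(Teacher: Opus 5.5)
\paragraph{Where the argument breaks.} The gap is exactly the identification you flag. Write $\ell_i=\log p_\theta(y_i\mid x_i)$, and note that $J_{i,t}=\partial h^{(l)}_{i,t}/\partial\theta^{(l)}$ is a single-position Jacobian. Then $\langle g_{i,t},J_{i,t}v_l\rangle$ is \emph{not} $\langle\nabla_{\theta^{(l)}}\ell_i,v_l\rangle$.

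The parameters $\theta^{(l)}$ act at every position. Under teacher forcing, $\ell_i$ sums the log-probabilities of all response tokens, and these depend, through attention in later layers, on the layer-$l$ outputs at all prompt and response positions. The chain rule therefore gives $\nabla_{\theta^{(l)}}\ell_i=\sum_{t}J_{i,t}^\top g_{i,t}$ summed over all positions $t$. The terms you discard can cancel or dominate the last-token term. So your per-token bound $|\langle g_{i,T_i},J_{i,T_i}v_l\rangle|\geq(1-\eta_l)|\langle g_{i,T_i},r_{i,T_i}\rangle|$ yields no lower bound on $|\langle\nabla_{\theta^{(l)}}\ell_i,v_l\rangle|$, and the Cauchy--Schwarz step has nothing to act on. It does not help that $s_l$ looks only at the last token, because $I_l$ looks at all of them.

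\paragraph{The missing idea.} Decouple the two scores.
\begin{itemize}
\item State both assumptions for the full-sequence residual $\boldsymbol{\Delta}_i^{(l)}=\operatorname{vec}(H_i^{(l)}-H_i^{(l-1)})$, its Jacobian $J_i^{(l)}$, and the full output gradient $g_i^{(l)}=\nabla_{\operatorname{vec}(H_i^{(l)})}\ell_i$. Then $\nabla_{\theta^{(l)}}\ell_i=(J_i^{(l)})^\top g_i^{(l)}$ holds exactly, given non-shared parameters and a layer input independent of $\theta^{(l)}$.
\item Use the last token only in the geometric step. The selected residual is a sub-block of the full one, so $\|\boldsymbol{\Delta}_i^{(l)}\|_2^2\geq\|h_{i,T_i}^{(l)}-h_{i,T_i}^{(l-1)}\|_2^2\geq2m_l(1-c_i)$, with $c_i$ the selected-token cosine. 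Causality makes the teacher-forced representation at $T_i$ equal to the input-only one used for $s_l$.
\item Take the sensitivity condition in the weighted form $\sum_i\langle g_i^{(l)},\boldsymbol{\Delta}_i^{(l)}\rangle^2\geq\kappa_l^2\sum_i\|\boldsymbol{\Delta}_i^{(l)}\|_2^2$. This is your second fallback and the paper's choice. Note that it is strictly weaker than the per-example version, not equivalent to it.
\end{itemize}
With these changes, the rest of your chain goes through.

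\paragraph{A weaker reproduction assumption.} The paper also needs the reproduction error only in aggregate:
\[
\Bigl(\sum_i\langle g_i^{(l)},e_i^{(l)}\rangle^2\Bigr)^{1/2}\leq\eta_l\Bigl(\sum_i\langle g_i^{(l)},\boldsymbol{\Delta}_i^{(l)}\rangle^2\Bigr)^{1/2},\qquad e_i^{(l)}=J_i^{(l)}v_l-\boldsymbol{\Delta}_i^{(l)}.
\]
It handles this with the reverse triangle inequality in $\mathbb{R}^N$ and a summed Cauchy--Schwarz. That assumption is weaker than your per-example bound, and the argument avoids dividing by $\|v_l\|_2$.
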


Theorem~\ref{thm:cosine_fisher_bound} establishes that, for fixed $A_l$,
lower input--output similarity implies a higher lower bound on the empirical
Fisher score. Consistent with this intuition,
Fig.~\ref{Figs:fisher_cosine} qualitatively show that layers with lower
similarity tend to have higher empirical Fisher scores.
To further quantify this relationship, we compute \textit{Spearman's rank correlation}~\citep{spearman1904proof}
between layer-wise empirical Fisher scores and input--output cosine
similarities, treating each layer as one paired observation.
Negative correlations indicate that lower-similarity layers tend to have
higher empirical Fisher scores. The formal definition and computation
are provided in Appendix~\ref{app:spearman}.

\begin{figure}[t]
    \vspace{-0mm}
    \hspace{-0mm}\includegraphics[width=0.8\linewidth]{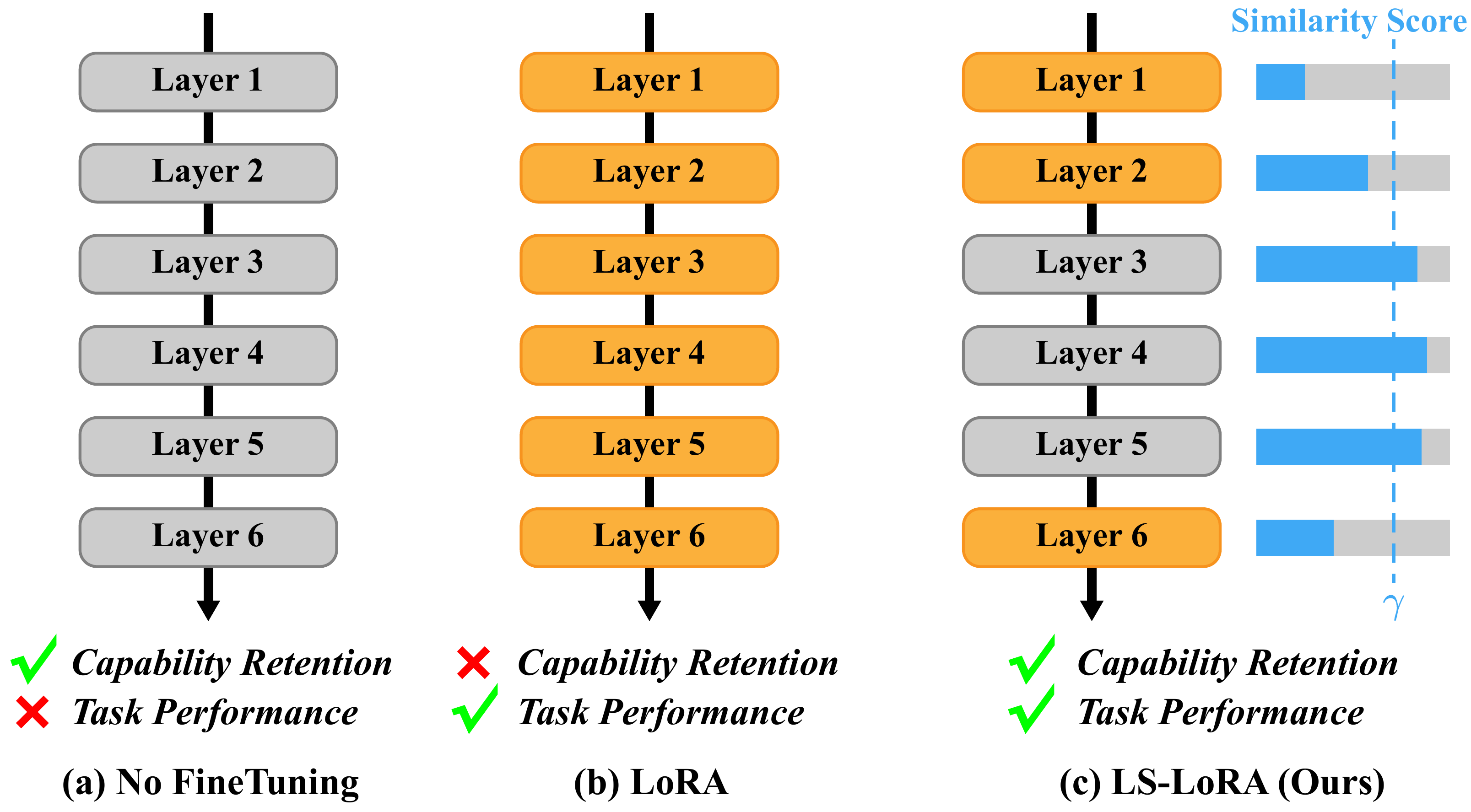}\centering
    \vspace{-2mm}
    \caption{Overview of LS-LoRA.
    (a) The pretrained model without fine-tuning.
    (b) Standard LoRA applies adapters to all layers.
    (c) LS-LoRA applies adapters only to layers whose input–output cosine
    similarity scores fall below the threshold $\gamma$ (dashed line),
    aiming to improve target-task performance while retaining pretrained
    capabilities. Orange denotes layers with trainable adapters, and gray
    denotes layers without adaptation.}
    \label{Figs:method}
    \vspace{-0mm}
\end{figure}

\begin{table}[h]
\caption{Spearman's rank correlations between layer-wise
input–output cosine similarity and empirical Fisher scores
for Llama-3.2-1B, Llama-3.2-3B, Llama-3-8B, and Mistral-7B on
mathematical reasoning and code generation.
Values in parentheses are the corresponding $p$-values.}
\label{Table:Spearman}
\centering 
\setlength{\tabcolsep}{7.5pt}
{
  \begin{tabular}{lcccc}
    \toprule[1.1pt]
    Task & Llama-3.2-1B & Llama-3.2-3B & Llama-3-8B & Mistral-7B \\
    \midrule
    Mathematical Reasoning & -0.82 (1e-8) & -0.84 (2e-9) & -0.89 (6e-12) & -0.85 (7e-10) \\
    Code Generation        & -0.76 (5e-7) & -0.82 (8e-9) & -0.86 (3e-10) & -0.91 (9e-13) \\
    \bottomrule[1.1pt]
  \end{tabular}
  }
\end{table}

As shown in Table~\ref{Table:Spearman}, we compute Spearman's
rank correlations between input–output cosine similarity
and empirical Fisher scores across layers for
Llama-3.2-1B, Llama-3.2-3B, Llama-3-8B, and Mistral-7B on mathematical
reasoning and code generation. The correlations range from $-0.76$ to
$-0.91$, indicating that layers with lower cosine
similarity tend to have higher empirical Fisher scores.
We also compute the corresponding $p$-values, reported in parentheses
in Table~\ref{Table:Spearman}. 
All reported $p$-values are below $10^{-5}$, indicating statistically
significant negative correlations across all evaluated model--task pairs.
Together, the strong negative correlations and their statistical
significance provide compelling statistical evidence that lower
input–output similarity is associated with higher empirical Fisher scores
across the evaluated models and tasks. These results substantiate the use
of input–output similarity as a practical proxy for ranking layers by
target-task sensitivity.

As illustrated in Fig.~\ref{Figs:method}, we select layer $l$ for fine-tuning
if its input–output cosine similarity
score satisfies $s_l<\gamma$, where $\gamma$ is a predefined threshold.
We place trainable LoRA adapters only in the selected layers while keeping
all pretrained weights frozen.
The selected layers are fixed before fine-tuning, and only their adapter
parameters are trained using the target-task supervised loss.
We call this method Layer-Selective LoRA (LS-LoRA).

\section{Experiments}


We first evaluate LS-LoRA on mathematical reasoning and code generation, 
using commonsense reasoning benchmarks to assess capability retention. 
We then conduct ablation studies to assess key design choices.

\subsection{Experimental Setup}

\paragraph{Tasks and evaluation.}
For mathematical reasoning, we fine-tune models on MATH-10K~\citep{hu-etal-2023-llm} and evaluate
accuracy on GSM8K, AQuA, and SVAMP. For code generation, we fine-tune on
Magicoder-Evol and evaluate on HumanEval+, reporting pass@$k$ for
$k\in\{1,5,10\}$. 
To assess retention of non-target capabilities, we evaluate commonsense
reasoning accuracy on ARC-Challenge (ARC-C), OpenBookQA (OB-QA), and
Social IQA (SI-QA). These benchmarks provide a focused assessment of
commonsense reasoning rather than a comprehensive evaluation of general
capabilities. We report individual benchmark scores
and their mean for mathematical reasoning and commonsense reasoning; for code
generation, we additionally report the mean of the three pass@$k$ scores.

\paragraph{Baselines.}
We include the model before fine-tuning, denoted as \textit{Base}, as a reference
for both target-task performance and general capability retention. We compare our
method with standard \textit{LoRA}~\citep{hu2022lora}, which trains adapters in all Transformer
layers, and two regularization-based baselines for mitigating forgetting:
L2-SP~\citep{li2018explicit} and EWC~\citep{kirkpatrick2017overcoming}.
Both regularization methods are combined with LoRA fine-tuning,
yielding \textit{L2-SP-LoRA} and \textit{EWC-LoRA}, respectively.


\paragraph{Models.}
We evaluate four models from two families: Llama-3.2-1B, Llama-3.2-3B,
Llama-3-8B, and Mistral-7B. Table~\ref{Table:math_reasoning} reports
mathematical reasoning results for all four models, and
Table~\ref{Table:code_generation} reports their code generation results.


\paragraph{Layer selection and training.}
Before fine-tuning, we randomly sample 100 examples from the target-task
training set and use their inputs as the calibration set. A single forward
pass over this set yields the input and output representations of all
Transformer layers. Following Section~\ref{sec:layer_selection}, we compute
each layer's cosine similarity at the last non-padding input token and
average it across the calibration examples. We set the selection threshold
to $\gamma=0.85$ for all experiments and train LoRA adapters
only in layers with $s_l<\gamma$. The selected layers remain fixed throughout
fine-tuning, and all pretrained weights remain frozen. This selection step
requires neither response labels nor backpropagation.
Following~\citet{pmlr-v267-liu25ba}, we train all fine-tuning methods with AdamW 
for three epochs, using a LoRA rank of 128 for mathematical reasoning and 64 for code generation.

\begin{table}[t]\small
\vspace{-2mm}
\caption{Mathematical reasoning performance and general capability retention after fine-tuning on mathematical reasoning tasks.
Avg. denotes the category mean; \textbf{bold} marks the best reported average among
fine-tuned methods for each model. Higher is better.
}
\label{Table:math_reasoning}
\centering 
\setlength{\tabcolsep}{4.8pt}
{
  \begin{tabular}{llcccccccc}
    \toprule[1.1pt]
    \multirow[c]{2}{*}[-0.5ex]{Model}& \multirow[c]{2}{*}[-0.5ex]{Method} & \multicolumn{4}{c}{Target-Task Performance (Math)} & \multicolumn{4}{c}{General Capability Retention}\\
    \cmidrule(lr){3-6}\cmidrule(lr){7-10}
    &&GSM8K&AQuA&SVAMP&Avg.&ARC-C&OB-QA&SI-QA&Avg. \\
    \midrule
    \multirow{5}{*}{Llama-3.2-1B} 
    & Base        & 31.0 & 6.7  & 60.1 & 32.6 & 28.8 & 30.6 & 40.8 & 33.4 \\
    \addlinespace[0.1em]
    \cdashline{2-10}
    \addlinespace[0.2em]
    & LoRA        & 49.5 & 38.2 & 67.6 & 51.8 & 22.8 & 24.6 & 29.9 & 25.8 \\
    & L2-SP-LoRA  & 42.8 & 25.6 & 70.3 & 46.2 & 26.1 & 29.2 & 34.0 & 29.8 \\
    & EWC-LoRA    & 50.3 & 36.6 & 68.6 & 51.9 & 22.8 & 27.4 & 32.3 & 27.5 \\
\rowcolor{cyan!10} \cellcolor{white} & LS-LoRA   & 50.3 & 37.8 & 68.4 & \textbf{52.2} & 30.0 & 32.2 & 33.1 & \textbf{31.8} \\

    \midrule
    \multirow{5}{*}{Llama-3.2-3B} 
    & Base   & 62.0 & 19.3 & 83.7 & 55.0 & 62.2 & 56.4 & 58.8 & 59.1 \\
    \addlinespace[0.1em]
    \cdashline{2-10}
    \addlinespace[0.2em]
    & LoRA   & 69.6 & 47.6 & 80.3 & 65.8 & 44.1 & 52.4 & 34.9 & 43.8 \\
    & L2-SP-LoRA  & 70.1 & 39.0 & 86.5 & 65.2 & 52.2 & 57.6 & 47.7 & 52.5 \\
    & EWC-LoRA    & 68.0 & 48.8 & 78.7 & 65.2 & 51.5 & 58.4 & 42.7 & 50.9 \\
\rowcolor{cyan!10} \cellcolor{white} &LS-LoRA   & 69.5 & 48.8 & 82.3 & \textbf{66.9} & 59.3 & 62.4 & 52.6 & \textbf{58.1} \\

    \midrule
    \multirow{5}{*}{Llama-3-8B} 
    & Base   & 59.6 & 49.6 & 53.8 & 54.3 & 33.1 & 25.6 & 37.4 & 32.0 \\
    \addlinespace[0.1em]
    \cdashline{2-10}
    \addlinespace[0.2em]
    & LoRA   & 69.8 & 42.6 & 79.6 & 64.0 & 25.8 & 31.8 & 18.5 & 25.4 \\
    & L2-SP-LoRA  & 69.4 & 40.9 & 80.5 & 63.6 & 28.1 & 37.8 & 20.9 & 28.9 \\
    & EWC-LoRA    & 68.5 & 42.5 & 76.5 & 62.5 & 28.8 & 31.8 & 26.7 & 29.1 \\
\rowcolor{cyan!10} \cellcolor{white} & LS-LoRA   & 70.4 & 45.3 & 78.6 & \textbf{64.8} & 33.9 & 37.2 & 33.4 & \textbf{34.8} \\

    \midrule
    \multirow{5}{*}{Mistral-7B} 
    & Base   & 38.5 & 20.5 & 55.2 & 38.1 & 66.6 & 61.4 & 65.1 & 64.4 \\
    \addlinespace[0.1em]
    \cdashline{2-10}
    \addlinespace[0.2em]
    & LoRA   & 59.8 & 32.7 & 73.8 & 55.4 & 34.6 & 39.0 & 37.5 & 37.0 \\
    & L2-SP-LoRA  & 58.5 & 33.5 & 72.9 & 55.0 & 42.6 & 40.6 & 41.2 & 41.5 \\
    & EWC-LoRA    & 60.9 & 30.7 & 73.8 & 55.1 & 42.3 & 43.0 & 34.5 & 39.9 \\
\rowcolor{cyan!10} \cellcolor{white} & LS-LoRA   & 61.0 & 33.1 & 74.6 & \textbf{56.2} & 48.3 & 42.4 & 47.9 & \textbf{46.2} \\
    \bottomrule[1.1pt]
  \end{tabular}
  }
\end{table}

\begin{table}[t]\small
\vspace{-2mm}
\caption{Code generation performance and general capability retention after fine-tuning on code generation tasks.
Avg. denotes the mean of the three reported metrics in each category;
\textbf{bold} marks the best average among fine-tuned methods for each model.
Higher is better.
}
\label{Table:code_generation}
\centering 
\setlength{\tabcolsep}{4.6pt}
{
  \begin{tabular}{llcccccccc}
    \toprule[1.1pt]
    \multirow[c]{2}{*}[-0.5ex]{Model}& \multirow[c]{2}{*}[-0.5ex]{Method} & \multicolumn{4}{c}{Target-Task Performance (Code)} & \multicolumn{4}{c}{General Capability Retention}\\
    \cmidrule(lr){3-6}\cmidrule(lr){7-10}
    &&Pass@1&Pass@5&Pass@10&Avg.&ARC-C&OB-QA&SI-QA&Avg. \\
    \midrule
    \multirow{5}{*}{Llama-3.2-1B} 
    & Base   & 33.0 & 43.3  & 47.2 & 41.2 & 28.8 & 30.6 & 40.8 & 33.4 \\
    \addlinespace[0.1em]
    \cdashline{2-10}
    \addlinespace[0.2em]
    & LoRA   & 35.6 & 44.7 & 47.8 & 42.7 & 26.1 & 28.6 & 39.1 & 31.3 \\
    & L2-SP-LoRA  & 32.9 & 42.7 & 45.7 & 40.4 & 26.4 & 30.4 & 39.9 & 32.2 \\
    & EWC-LoRA    & 34.3 & 42.8 & 46.6 & 41.3 & 26.5 & 31.0 & 39.2 & 32.2 \\
\rowcolor{cyan!10} \cellcolor{white} & LS-LoRA   & 35.8 & 45.6 & 49.2 & \textbf{43.5} & 26.9 & 30.6 & 39.9 & \textbf{32.5} \\

    \midrule
    \multirow{5}{*}{Llama-3.2-3B} 
    & Base   & 50.6 & 60.8  & 62.1 & 57.8 & 62.2 & 56.4 & 58.8 & 59.1 \\
    \addlinespace[0.1em]
    \cdashline{2-10}
    \addlinespace[0.2em]
    & LoRA   & 51.5 & 61.1 & 63.9 & 58.8 & 68.5 & 62.2 & 63.3 & 64.7 \\
    & L2-SP-LoRA  & 49.2 & 62.5 & 66.4 & 59.4 & 66.5 & 60.6 & 62.1 & 63.1 \\
    & EWC-LoRA    & 51.6 & 61.0 & 64.0 & 58.9 & 68.5 & 62.6 & 63.1 & 64.7 \\
\rowcolor{cyan!10} \cellcolor{white} & LS-LoRA   & 52.6 & 62.1 & 64.3 & \textbf{59.6} & 68.9 & 65.2 & 65.7 & \textbf{66.6} \\

    \midrule
    \multirow{5}{*}{Llama-3-8B} 
    & Base   & 55.1 & 63.1 & 65.2 & 61.1 & 33.1 & 25.6 & 37.4 & 32.0 \\
    \addlinespace[0.1em]
    \cdashline{2-10}
    \addlinespace[0.2em]
    & LoRA   & 53.4 & 64.9 & 68.5 & 62.3 & 68.9 & 53.2 & 62.9 & 61.7 \\
    & L2-SP-LoRA  & 52.4 & 63.0 & 66.0 & 60.5 & 65.6 & 59.0 & 63.7 & 62.8 \\
    & EWC-LoRA    & 54.1 & 63.5 & 67.8 & 61.8 & 65.9 & 62.8 & 59.1 & 62.6 \\
 \rowcolor{cyan!10} \cellcolor{white} & LS-LoRA   & 53.2 & 65.2 & 68.8 & \textbf{62.4} & 72.6 & 63.8 & 65.7 & \textbf{67.4} \\

 \midrule
    \multirow{5}{*}{Mistral-7B} 
    & Base   & 34.51 & 44.3 & 48.7 & 42.5 & 66.6 & 61.4 & 65.1 & 64.4 \\
    \addlinespace[0.1em]
    \cdashline{2-10}
    \addlinespace[0.2em]
    & LoRA   & 38.5 & 49.5 & 53.2 & 47.0 & 41.1 & 31.4 & 41.1 & 37.9 \\
    & L2-SP-LoRA  & 38.2 & 47.2 & 50.6 & 45.4 & 42.5 & 35.1 & 43.2 & 40.3 \\
    & EWC-LoRA    & 38.5 & 48.2 & 51.8 & 46.2 & 45.1 & 31.0 & 44.3 & 40.1 \\
\rowcolor{cyan!10} \cellcolor{white} & LS-LoRA   & 40.3 & 51.1 & 54.5 & \textbf{48.6} & 48.9 & 39.2 & 44.1 & \textbf{44.1} \\

    \bottomrule[1.1pt]
  \end{tabular}
  }
\end{table}

\subsection{Mathematical Reasoning}

Table~\ref{Table:math_reasoning} shows that LS-LoRA achieves the highest
average mathematical and commonsense reasoning accuracy among the
compared fine-tuning methods on all four models. Relative to all-layer
LoRA, it improves these averages by 0.4--1.1 and 6.0--14.3 percentage
points, respectively, improving both target-task performance and
capability retention.
The largest retention gain occurs on Llama-3.2-3B: LS-LoRA narrows
the commonsense accuracy drop from 15.3 to 1.0 percentage points
relative to Base, while improving mathematical accuracy from
65.8\% to 66.9\% over LoRA. It also exceeds L2-SP-LoRA and EWC-LoRA
in commonsense accuracy by 5.6 and 7.2 points, respectively,
with higher mathematical accuracy than both.
These results support layer selection as an effective way to balance
adaptation and retention without an additional regularization loss.

\subsection{Code Generation}

Table~\ref{Table:code_generation} shows that LS-LoRA achieves both
the highest mean of the reported pass@$k$ scores and the highest
average commonsense accuracy among the compared fine-tuning methods
on all four models. Specifically, it outperforms all compared
baselines at each reported pass@$k$ on Llama-3.2-1B and Mistral-7B.
Compared with all-layer LoRA, LS-LoRA improves average commonsense
accuracy by 1.2--6.2 percentage points. On Llama-3.2-3B and Llama-3-8B,
its commonsense accuracy exceeds Base, indicating positive transfer
on the evaluated benchmarks rather than reduced forgetting.
On Llama-3.2-1B and Mistral-7B, it reduces the accuracy drop
relative to LoRA but remains below Base.
Overall, LS-LoRA combines competitive code generation performance
with consistently higher average commonsense accuracy.

\subsection{Ablation and Efficiency Analysis}

\paragraph{Similarity-based versus random layer selection.}
Rand-LoRA randomly selects the same number of layers as LS-LoRA,
allowing us to assess the value of the selection criterion beyond
simply reducing the number of adapted layers.
On Llama-3.2-1B (Table~\ref{Table:random_layer}), LS-LoRA achieves
52.2\% average math accuracy and 31.8\% average commonsense accuracy,
compared with 51.4\% and 27.1\% for Rand-LoRA.
Random selection improves retention over all-layer LoRA, but
similarity-based selection yields better averages in both categories.
This advantage holds across all four evaluated models, with results
for the other three reported in Appendix~\ref{app:random_layer}.

\begin{table}[h]\small
\caption{Layer selection on Llama-3.2-1B for mathematical reasoning. Rand-LoRA selects the same number of layers as LS-LoRA at random. Bold marks the best fine-tuned average; higher is better.}
\label{Table:random_layer}
\centering 
\setlength{\tabcolsep}{4.8pt}
{
  \begin{tabular}{llcccccccc}
    \toprule[1.1pt]
    \multirow[c]{2}{*}[-0.5ex]{Model}& \multirow[c]{2}{*}[-0.5ex]{Method} & \multicolumn{4}{c}{Target-Task Performance (Math)} & \multicolumn{4}{c}{General Capability Retention}\\
    \cmidrule(lr){3-6}\cmidrule(lr){7-10}
    &&GSM8K&AQuA&SVAMP&Avg.&ARC-C&OB-QA&SI-QA&Avg. \\
    \midrule
    \multirow{4}{*}{Llama-3.2-1B} 
    & Base       & 31.0 & 6.7  & 60.1 & 32.6 & 28.8 & 30.6 & 40.8 & 33.4 \\
    \addlinespace[0.1em]
    \cdashline{2-10}
    \addlinespace[0.2em]
    & LoRA       & 49.5 & 38.2 & 67.6 & 51.8 & 22.8 & 24.6 & 29.9 & 25.8 \\
    & Rand-LoRA  & 50.8 & 36.6 & 66.8 & 51.4 & 25.3 & 28.0 & 28.1 & 27.1 \\
    \rowcolor{cyan!10} \cellcolor{white} & LS-LoRA   & 50.3 & 37.8 & 68.4 & \textbf{52.2} & 30.0 & 32.2 & 33.1 & \textbf{31.8} \\

\bottomrule[1.1pt]
  \end{tabular}
  }
\end{table}

\paragraph{Memory and training time.}
Table~\ref{Table:Memory_Time} compares LoRA and LS-LoRA on MATH-10K.
Across the four models, LS-LoRA reduces peak GPU memory by
4.2--14.4\% and training time by 10.0--29.6\%.
For Llama-3.2-3B, memory decreases from 13.1 to 11.8~GB and training
time from 1.7 to 1.2~hours, demonstrating practical savings from
restricting adaptation to selected layers.

\begin{table}[h]
\caption{Comparison of peak GPU memory usage and training time
between LoRA and LS-LoRA on the MATH-10K dataset across four language models. Lower is better.}
\label{Table:Memory_Time}
\centering 
\setlength{\tabcolsep}{3pt}
{
  \begin{tabular}{lcccccccc}
    \toprule[1.1pt]
    \multirow[c]{2}{*}[-0.5ex]{Metric}  & \multicolumn{2}{c}{Llama-3.2-1B} & \multicolumn{2}{c}{Llama-3.2-3B} & \multicolumn{2}{c}{Llama-3-8B} & \multicolumn{2}{c}{Mistral-7B}\\
    \cmidrule(lr){2-3}\cmidrule(lr){4-5}\cmidrule(lr){6-7}\cmidrule(lr){8-9}
                   & LoRA&LS-LoRA & LoRA&LS-LoRA & LoRA&LS-LoRA & LoRA&LS-LoRA  \\
    \midrule
    GPU Memory (GB)       & 7.2 & \textbf{6.9}    & 13.1 & \textbf{11.8}  & 24.8 & \textbf{22.5}  & 21.5 & \textbf{18.4}  \\
    Training Time (h)     & 1.0 & \textbf{0.9}    & 1.7  & \textbf{1.2}   & 2.1  & \textbf{1.8}   & 2.7  & \textbf{1.9}   \\
    \bottomrule[1.1pt]
  \end{tabular}
  }
\end{table}

\paragraph{Last-token versus token-averaged similarity.}
Table~\ref{Table:Token_Mean} compares layer-wise cosine similarities
computed using the last token with those averaged over all tokens.
Spearman correlations of 0.92--0.93 across the three Llama models
indicate strong agreement in layer rankings. We therefore use the
last non-padding token as a simple approximation to token averaging.

\paragraph{Calibration sample size.}
On MATH-10K with Llama-3.2-3B, we compare similarity scores computed
from 1, 10, and 100 samples against a 1,000-sample reference
(Table~\ref{Table:Num_Sample}). The corresponding Spearman correlations
are 0.938, 0.990, and 0.999, indicating stable layer rankings even
with small calibration sets. Computing similarity scores with 100 samples
remains inexpensive, taking only 2.8~seconds on Llama-3.2-3B.
We therefore use 100 calibration samples in our experiments to obtain
stable layer rankings at low computational cost.

\begin{table}[h]
\centering
\begin{minipage}[t]{0.48\textwidth}
  \centering
  \caption{Spearman correlation between token-averaged and
last-token cosine similarities across Llama models of different sizes.}
  \label{Table:Token_Mean}
  \setlength{\tabcolsep}{10pt}
  \begin{tabular}{cccc}
    \toprule[1.1pt]
    Model size & 1B & 3B & 8B \\
    \midrule
    $\rho$ & 0.92 & 0.92 & 0.93 \\
    \bottomrule[1.1pt]
  \end{tabular}
\end{minipage}\hfill
\begin{minipage}[t]{0.48\textwidth}
  \centering
  \caption{Spearman correlation between cosine similarities
  estimated using 1, 10, or 100 samples and those estimated
  using 1,000 samples.}
  \label{Table:Num_Sample}
  \setlength{\tabcolsep}{3pt}
  \begin{tabular}{cccc}
    \toprule[1.1pt]
    Number of samples & 1 & 10 & 100 \\
    \midrule
    $\rho$ & 0.938 & 0.990 & 0.999 \\
    \bottomrule[1.1pt]
  \end{tabular}
\end{minipage}
\end{table}

\paragraph{Applicability to other PEFT methods.}
We apply the same layer-selection strategy to DoRA~\citep{liu2024dora} and PiSSA~\citep{NEURIPS2024_db36f4d6} on
Llama-3.2-3B, with detailed results in Appendix~\ref{app:other_peft}
(Table~\ref{Table:dora_pisssa}). LS-DoRA improves average
math accuracy from 66.0\% to 67.0\% and commonsense accuracy from
37.9\% to 58.9\%. LS-PiSSA improves these averages from 66.2\% to
67.6\% and from 59.0\% to 64.7\%, respectively. These results suggest
that the benefits of layer selection extend beyond LoRA.

\section{Conclusion}

We investigated where to apply adaptation to balance target-task learning
and capability retention in LLMs. Our analysis identifies a consistent
negative association between layer-wise input--output cosine similarity
and empirical Fisher sensitivity, motivating a forward-only criterion
for layer selection. Based on this criterion, LS-LoRA places adapters
in selected layers before fine-tuning, without requiring response labels
or backpropagation during selection. Across four evaluated models on
mathematical reasoning and code generation, LS-LoRA achieves higher
average target-task and commonsense reasoning scores than all-layer LoRA,
while reducing memory usage and training time in the mathematical reasoning
experiments. Comparisons with random layer selection support the value of
the selection criterion, and results with DoRA and PiSSA suggest its
applicability beyond LoRA. These findings highlight adaptation location
as a practical design choice for efficient fine-tuning and capability
retention.

\subsection*{AI use statement}

We used generative AI tools to polish the language of the manuscript, 
including improving grammar, clarity, and readability.
We did not use generative AI tools for any other
research or writing tasks. We reviewed all AI-assisted
edits to ensure that they accurately preserved our intended
meaning and technical content. We take full responsibility for the
final content of this work.

\bibliography{references}
\bibliographystyle{iclr2027_conference}

\appendix
\section{Proof of the Local Distributional Sensitivity Proposition}
\label{app:proof_fisher_kl}

\begin{proof}[Proof of Proposition~\ref{pro:fisher_kl}]
Write $g(\delta)=D_{\mathrm{KL}}(p_\theta\Vert p_{\theta+\delta})$.
Differentiating the normalization identity once and twice gives
\[
    \nabla g(0)
    =-\mathbb{E}_{Z\sim p_\theta}[\nabla_\theta\log p_\theta(Z)]=0,
    \qquad
    \nabla^2 g(0)
    =-\mathbb{E}_{Z\sim p_\theta}[\nabla_\theta^2\log p_\theta(Z)]
    =F(\theta).
\]
Since $g(0)=0$, the result follows from the second-order Taylor expansion.
\end{proof}

\section{Cosine Similarity and an Empirical Fisher Lower Bound}
\label{app:proof_cosine_fisher}

We give the conditions and proof for
Theorem~\ref{thm:cosine_fisher_bound}. Both scores use the same
target-task examples $\mathcal{S}_\tau=\{(x_i,y_i)\}_{i=1}^N$,
with $N_c=N$ and $\mathcal{V}=\{(i,T_i)\}_{i=1}^N$.
For the analysis, set $\epsilon=0$ in
Eq.~\ref{eq:layer_cosine_similarity}.

\paragraph{Full-sequence representations.}
Let $\ell_i=\log p_\theta(y_i\mid x_i)$. In this appendix,
$H_i^{(l-1)}$ and $H_i^{(l)}$ denote the layer representations
at all positions used to compute this full-response log-likelihood
under teacher forcing. The index $T_i$ still denotes the final valid
input position, not the end of the response.
For a deterministic causal model with the same positions and mask,
the representation at $T_i$ agrees with that from the input-only
forward pass used to compute $s_l$.
Define the full residual, its Jacobian, and the output gradient by
\begin{equation}
\begin{aligned}
    \boldsymbol{\Delta}_i^{(l)}
    &=\operatorname{vec}\bigl(H_i^{(l)}-H_i^{(l-1)}\bigr),\\
    J_i^{(l)}
    &=\frac{\partial\boldsymbol{\Delta}_i^{(l)}}
    {\partial\theta^{(l)}},
    \qquad
    g_i^{(l)}=\nabla_{\operatorname{vec}(H_i^{(l)})}\ell_i.
\end{aligned}
\label{eq:cosine_fisher_residual}
\end{equation}
Assume differentiability, $\theta^{(l)}\neq0$, that the layer input is
independent of $\theta^{(l)}$, and that all dependence of $\ell_i$ on
these parameters passes through the full output $H_i^{(l)}$.
This allows dependence on every output token, as in a standard
feed-forward Transformer with parameters not shared across layers.
The chain rule then gives
$\nabla_{\theta^{(l)}}\ell_i=(J_i^{(l)})^\top g_i^{(l)}$.

\paragraph{Nonzero selected representations.}
Assume that the selected input and output vectors are nonzero and set
\begin{equation}
    m_l=\min_{1\leq i\leq N}
    \|h_{i,T_i}^{(l-1)}\|_2\|h_{i,T_i}^{(l)}\|_2>0.
    \label{eq:cosine_fisher_norm_product}
\end{equation}
Their norms may differ both within and across examples.

\paragraph{Average directional task sensitivity.}
Assume that there exists $\kappa_l>0$ such that
\begin{equation}
    \sum_{i=1}^N
    |\langle g_i^{(l)},\boldsymbol{\Delta}_i^{(l)}\rangle|^2
    \geq\kappa_l^2\sum_{i=1}^N
    \|\boldsymbol{\Delta}_i^{(l)}\|_2^2.
    \label{eq:cosine_fisher_alignment}
\end{equation}
This condition requires task sensitivity along the residuals on average.
Individual examples may have zero directional sensitivity; no
positive lower bound is required for each example separately.

\paragraph{Approximate local parameter direction.}
Assume that there exist a common direction $v_l\in\mathbb{R}^{d_l}$,
$C_l>0$, and $0\leq\eta_l<1$ such that
\begin{equation}
    \|v_l\|_2\leq C_l\|\theta^{(l)}\|_2,
    \qquad
    e_i^{(l)}=J_i^{(l)}v_l-\boldsymbol{\Delta}_i^{(l)},
    \label{eq:cosine_fisher_local_direction}
\end{equation}
and
\begin{equation}
    \left(\sum_{i=1}^N|\langle g_i^{(l)},e_i^{(l)}\rangle|^2\right)^{1/2}
    \leq\eta_l
    \left(\sum_{i=1}^N
    |\langle g_i^{(l)},\boldsymbol{\Delta}_i^{(l)}\rangle|^2\right)^{1/2}.
    \label{eq:cosine_fisher_direction_error}
\end{equation}
The direction need only reproduce the residual approximately in its
effect on the task. Errors orthogonal to the output gradients are
unrestricted. Exact reproduction, $J_i^{(l)}v_l=\boldsymbol{\Delta}_i^{(l)}$,
is a special case with $\eta_l=0$.

For example, a residual sublayer of the form
$r_l(H;W,\phi)=a_\phi(H)W^\top$ satisfies the exact local condition with
$v_l=(\operatorname{vec}(W),0)$ and $C_l=1$, when
$\theta^{(l)}=(\operatorname{vec}(W),\phi)$.
The map $a_\phi$ may be nonlinear. This example establishes only
the local condition, not the average task-sensitivity condition.

\begin{proof}[Proof of Theorem~\ref{thm:cosine_fisher_bound}]
Write $u_i=h_{i,T_i}^{(l-1)}$, $w_i=h_{i,T_i}^{(l)}$, and
$c_i=\langle u_i,w_i\rangle/(\|u_i\|_2\|w_i\|_2)$.
Since the full residual contains the selected token residual,
\begin{equation}
\begin{aligned}
    \|\boldsymbol{\Delta}_i^{(l)}\|_2^2
    &\geq\|w_i-u_i\|_2^2\\
    &=(\|w_i\|_2-\|u_i\|_2)^2
      +2\|u_i\|_2\|w_i\|_2(1-c_i)\\
    &\geq2m_l(1-c_i).
\end{aligned}
\label{eq:cosine_fisher_geometry}
\end{equation}
Averaging gives
$N^{-1}\sum_i\|\boldsymbol{\Delta}_i^{(l)}\|_2^2
\geq2m_l(1-s_l)$.

For the parameter gradient, the chain rule gives
\[
    \langle v_l,\nabla_{\theta^{(l)}}\ell_i\rangle
    =\langle g_i^{(l)},\boldsymbol{\Delta}_i^{(l)}\rangle
     +\langle g_i^{(l)},e_i^{(l)}\rangle.
\]
Apply the reverse triangle inequality to the vectors of these
scalar quantities over the $N$ examples, then use
Eqs.~\ref{eq:cosine_fisher_direction_error}
and~\ref{eq:cosine_fisher_alignment}:
\begin{equation}
\begin{aligned}
    \left(\sum_i
    |\langle v_l,\nabla_{\theta^{(l)}}\ell_i\rangle|^2\right)^{1/2}
    &\geq(1-\eta_l)
    \left(\sum_i
    |\langle g_i^{(l)},\boldsymbol{\Delta}_i^{(l)}\rangle|^2\right)^{1/2}\\
    &\geq(1-\eta_l)\kappa_l
    \left(\sum_i\|\boldsymbol{\Delta}_i^{(l)}\|_2^2\right)^{1/2}.
\end{aligned}
\end{equation}
On the other hand, Cauchy--Schwarz implies
\[
    \sum_i|\langle v_l,\nabla_{\theta^{(l)}}\ell_i\rangle|^2
    \leq C_l^2\|\theta^{(l)}\|_2^2
    \sum_i\|\nabla_{\theta^{(l)}}\ell_i\|_2^2.
\]
Combining these inequalities and dividing by $Nd_l$ yields
\[
    I_l
    \geq\frac{(1-\eta_l)^2\kappa_l^2}
    {Nd_l C_l^2\|\theta^{(l)}\|_2^2}
    \sum_i\|\boldsymbol{\Delta}_i^{(l)}\|_2^2
    \geq\frac{2(1-\eta_l)^2\kappa_l^2m_l}
    {d_l C_l^2\|\theta^{(l)}\|_2^2}(1-s_l).
\]
\end{proof}

\begin{table}[t]\small
\caption{Comparison of layer selection strategies for LoRA fine-tuning
on mathematical reasoning tasks.
LoRA adapts all layers; LS-LoRA selects layers with low input–output
cosine similarity; Rand-LoRA randomly selects the same number of layers
as LS-LoRA.
Bold marks the best average among fine-tuned methods for each model.
Higher is better.}
\label{Table:random_layer_additional}
\centering 
\setlength{\tabcolsep}{4.8pt}
{
  \begin{tabular}{llcccccccc}
    \toprule[1.1pt]
    \multirow[c]{2}{*}[-0.5ex]{Model}& \multirow[c]{2}{*}[-0.5ex]{Method} & \multicolumn{4}{c}{Target-Task Performance (Math)} & \multicolumn{4}{c}{General Capability Retention}\\
    \cmidrule(lr){3-6}\cmidrule(lr){7-10}
    &&GSM8K&AQuA&SVAMP&Avg.&ARC-C&OB-QA&SI-QA&Avg. \\
    \midrule
    \multirow{4}{*}{Llama-3.2-3B} 
    & Base   & 62.0 & 19.3 & 83.7 & 55.0 & 62.2 & 56.4 & 58.8 & 59.1 \\
    \addlinespace[0.1em]
    \cdashline{2-10}
    \addlinespace[0.2em]
    & LoRA       & 69.6 & 47.6 & 80.3 & 65.8 & 44.1 & 52.4 & 34.9 & 43.8 \\
    & Rand-LoRA  & 69.6 & 46.1 & 80.2 & 65.3 & 57.3 & 59.0 & 53.8 & 56.7 \\
    \rowcolor{cyan!10} \cellcolor{white} &LS-LoRA   & 69.5 & 48.8 & 82.3 & \textbf{66.9} & 59.3 & 62.4 & 52.6 & \textbf{58.1} \\

    \midrule
    \multirow{4}{*}{Llama-3-8B} 
    & Base       & 59.6 & 49.6 & 53.8 & 54.3 & 33.1 & 25.6 & 37.4 & 32.0 \\
    \addlinespace[0.1em]
    \cdashline{2-10}
    \addlinespace[0.2em]
    & LoRA       & 69.8 & 42.6 & 79.6 & 64.0 & 25.8 & 31.8 & 18.5 & 25.4 \\
    & Rand-LoRA  & 70.1 & 44.5 & 78.4 & 64.3 & 29.0 & 32.0 & 26.2 & 29.1 \\
    \rowcolor{cyan!10} \cellcolor{white} & LS-LoRA   & 70.4 & 45.3 & 78.6 & \textbf{64.8} & 33.9 & 37.2 & 33.4 & \textbf{34.8} \\

    \midrule
    \multirow{4}{*}{Mistral-7B} 
    & Base   & 38.5 & 20.5 & 55.2 & 38.1 & 66.6 & 61.4 & 65.1 & 64.4 \\
    \addlinespace[0.1em]
    \cdashline{2-10}
    \addlinespace[0.2em]
    & LoRA       & 59.8 & 32.7 & 73.8 & 55.4 & 34.6 & 39.0 & 37.5 & 37.0 \\
    & Rand-LoRA  & 61.7 & 31.5 & 74.0 & 55.7 & 41.3 & 39.0 & 35.2 & 38.5 \\
    \rowcolor{cyan!10} \cellcolor{white} & LS-LoRA   & 61.0 & 33.1 & 74.6 & \textbf{56.2} & 48.3 & 42.4 & 47.9 & \textbf{46.2} \\

\bottomrule[1.1pt]
  \end{tabular}
  }
\end{table}

\begin{table}[h]\small
\caption{Applying our layer selection strategy to DoRA and PiSSA on
Llama-3.2-3B for mathematical reasoning.
LS-DoRA and LS-PiSSA apply the respective adapters only to layers with
low input--output cosine similarity.
We report accuracy (\%) on mathematical reasoning and commonsense reasoning
benchmarks, with the latter assessing general capability retention.
Avg. denotes the mean within each category; \textbf{bold} marks the better
average within each method pair. Higher is better.}
\label{Table:dora_pisssa}
\centering 
\setlength{\tabcolsep}{9pt}
{
  \begin{tabular}{lcccccccc}
    \toprule[1.1pt]
    \multirow[c]{2}{*}[-0.5ex]{Method}& \multicolumn{4}{c}{Target-Task Performance (Math)} & \multicolumn{4}{c}{General Capability Retention}\\
    \cmidrule(lr){2-5}\cmidrule(lr){6-9}
    &GSM8K&AQuA&SVAMP&Avg.&ARC-C&OB-QA&SI-QA&Avg. \\
    \midrule
    DoRA       & 68.1 & 48.8 & 81.2 & 66.0 & 36.4 & 43.8 & 33.4 & 37.9 \\
    LS-DoRA    & 69.5 & 49.2 & 82.2 & \textbf{67.0} & 60.3 & 62.8 & 53.7 & \textbf{58.9} \\
    \midrule
    PiSSA      & 70.6 & 48.0 & 79.9 & 66.2 & 59.2 & 61.8 & 55.9 & 59.0 \\
    LS-PiSSA   & 71.3 & 50.0 & 81.4 & \textbf{67.6} & 68.6 & 66.0 & 59.5 & \textbf{64.7} \\

\bottomrule[1.1pt]
  \end{tabular}
  }
\end{table}

\section{Spearman's Rank Correlation}
\label{app:spearman}

\begin{definition}[Spearman's rank correlation]
\label{lem:spearman}
Let $\mathbf{a}=(a_1,\ldots,a_n)$ and $\mathbf{b}=(b_1,\ldots,b_n)$
be two nonconstant vectors of paired observations, with $n\geq2$.
Let $r_i$ and $q_i$ be the ascending ranks of $a_i$ and $b_i$
within their respective vectors, assigning average ranks to ties.
Their Spearman rank correlation $\rho$ is the Pearson correlation of these ranks:
\begin{equation}
    \rho(\mathbf{a},\mathbf{b})
    =\operatorname{Corr}(\mathbf{r},\mathbf{q})
    =\frac{\sum_{i=1}^{n}(r_i-\bar r)(q_i-\bar q)}
    {\sqrt{\sum_{i=1}^{n}(r_i-\bar r)^2}
     \sqrt{\sum_{i=1}^{n}(q_i-\bar q)^2}},
    \label{eq:spearman_general}
\end{equation}
where $\bar r$ and $\bar q$ are the mean ranks and $-1\leq\rho\leq1$.
\end{definition}

When Spearman's rank correlation $\rho$ is close to $1$, the two variables tend to increase
together. When $\rho$ is close to $-1$, one tends to decrease
as the other increases.

For each model and task $\tau$, we apply Definition~\ref{lem:spearman}
to the empirical Fisher scores $\mathbf{I}=(I_1,\ldots,I_L)$ and
input–output cosine similarities $\mathbf{s}=(s_1,\ldots,s_L)$,
treating each layer as one paired observation:
\begin{equation}
    \rho_\tau
    = \operatorname{Corr}
    \left(
        \operatorname{rank}(\mathbf{I}),
        \operatorname{rank}(\mathbf{s})
    \right).
    \label{eq:fisher_similarity_correlation}
\end{equation}
where $\operatorname{rank}$ uses the ranking convention in
Definition~\ref{lem:spearman}.

\section{Additional Random Layer Selection Results}
\label{app:random_layer}

Table~\ref{Table:random_layer_additional} extends the comparison in
Table~\ref{Table:random_layer} to Llama-3.2-3B, Llama-3-8B, and Mistral-7B.
Rand-LoRA selects the same number of layers as LS-LoRA at random.
Across all three models, LS-LoRA achieves higher average mathematical
reasoning and commonsense reasoning accuracy than Rand-LoRA,
supporting the value of similarity-based selection beyond simply
reducing the number of adapted layers.

\section{Applicability to Other PEFT Methods}
\label{app:other_peft}

Table~\ref{Table:dora_pisssa} reports results for DoRA and PiSSA and
their layer-selective variants on Llama-3.2-3B. Applying our
input--output similarity criterion improves average accuracy on both
mathematical reasoning and commonsense reasoning benchmarks for both methods.

\end{document}

%% file: math_commands.tex
\usepackage{amsmath,amsfonts,bm}

\def\eqref#1{equation~\ref{#1}}

\def\1{\bm{1}}

\DeclareMathAlphabet{\mathsfit}{\encodingdefault}{\sfdefault}{m}{sl}
\SetMathAlphabet{\mathsfit}{bold}{\encodingdefault}{\sfdefault}{bx}{n}

